\documentclass[twoside,11pt]{article}

\usepackage{jmlr2e}
\usepackage{amsmath}
\usepackage{algorithm}
\usepackage{algorithmic}
\usepackage{bbm}

\newcommand{\mb}{\mathbf}
\newcommand{\mc}{\mathcal}
\newcommand{\our}{{Gadam}}

\ShortHeadings{IFM LAB TUTORIAL SERIES \# 1, COPYRIGHT \copyright IFM LAB}{JIAWEI ZHANG, IFM LAB DIRECTOR}
\firstpageno{1}

\begin{document}

\title{Gradient Descent based Optimization Algorithms for Deep Learning Models Training}

\author{\name Jiawei Zhang \email jiawei@ifmlab.org \\
      \addr {Founder and Director}\\
      {Information Fusion and Mining Laboratory}\\
       (First Version: February 2019; Revision: March 2019.)}

\maketitle

\begin{abstract}

In this paper, we aim at providing an introduction to the gradient descent based optimization algorithms for learning deep neural network models. Deep learning models involving multiple nonlinear projection layers are very challenging to train. Nowadays, most of the deep learning model training still relies on the back propagation algorithm actually. In back propagation, the model variables will be updated iteratively until convergence with gradient descent based optimization algorithms. Besides the conventional vanilla gradient descent algorithm, many gradient descent variants have also been proposed in recent years to improve the learning performance, including Momentum, Adagrad, Adam, Gadam, etc., which will all be introduced in this paper respectively.

\end{abstract}

\begin{keywords}
Gradient Descent; Optimization Algorithm; Deep Learning
\end{keywords}

\tableofcontents

\section{Introduction}\label{ref:sec_intro}

In the real-world research and application works about deep learning, training the deep models effectively still remains one of the most challenging work for both researchers and practitioners. By this context so far, most of the deep model training is still based on the back propagation algorithm, which propagates the errors from the output layer backward and updates the variables layer by layer with the gradient descent based optimization algorithms. Gradient descent plays an important role in training the deep learning models, and lots of new variant algorithms have been proposed in recent years to further improve its performance. Compared with the high-order derivative based algorithms, e.g., Newton's method, etc., gradient descent together with its various variant algorithms based on first-order derivative are much more efficient and practical for deep models. In this paper, we will provide a comprehensive introduction to the deep learning model training algorithms.

Formally, given the training data $\mc{T} =\{(\mb{x}_1, \mb{y}_1), (\mb{x}_2, \mb{y}_2), \cdots, (\mb{x}_n, \mb{y}_n)\}$ involving $n$ pairs of feature-label vectors, where the feature vector $\mb{x}_i \in \mathbb{R}^{d_x}$ and label vector $\mb{y}_i \in \mathbb{R}^{d_y}$ are of dimensions $d_x$ and $d_y$ respectively. We can represent the deep learning model used to classify the data as a mapping: $F(\cdot; \boldsymbol{\theta}): \mc{X} \to \mc{Y}$, where $\mc{X}$ and $\mc{Y}$ denotes the feature and label space respectively and $\boldsymbol{\theta}$ denotes the variable vector involved in the mapping. Formally, given an input data instance featured by vector $\mb{x}_i \in \mc{X}$, we can denotes the output by the deep learning model as $\hat{\mb{y}}_i=F(\mb{x}_i; \boldsymbol{\theta})$. Compared against the true label vector $\mb{y}_i$ of the input instance, we can represent the introduced error by the model with the loss term $\ell(\hat{\mb{y}}_i, \mb{y}_i)$. In the remaining part of this paper, we may use error and loss interchangeably without any differentiations.

Many different loss functions, i.e., $\ell(\cdot, \cdot)$, have been proposed to measure the introduced model errors. Some representative examples include
\begin{itemize}

\item \textit{Mean Square Error (MSE)}: 
\begin{equation}
\ell_{MSE}(\hat{\mb{y}}_i, \mb{y}_i) = \frac{1}{d_y} \sum_{j=1}^{d_y} \left(\mb{y}_i(j) - \hat{\mb{y}}_i(j) \right)^2.
\end{equation}

\item \textit{Mean Absolute Error (MAE)}:
\begin{equation}
\ell_{MAE}(\hat{\mb{y}}_i, \mb{y}_i) = \frac{1}{d_y} \sum_{j=1}^{d_y} \left| \mb{y}_i(j) - \hat{\mb{y}}_i(j) \right|.
\end{equation}

\item \textit{Hinge Loss}:
\begin{equation}
\ell_{Hinge}(\hat{\mb{y}}_i, \mb{y}_i) = \sum_{j \neq l_{\mb{y}_i}} \max(0, \hat{\mb{y}}_i(j) - \hat{\mb{y}}_i(l_{\mb{y}_i}) + 1),
\end{equation}
where $l_{\mb{y}_i}$ denotes the true class label index for the instance according to vector $\mb{y}_i$.

\item \textit{Cross Entropy Loss}:
\begin{equation}
\ell_{CE}(\hat{\mb{y}}_i, \mb{y}_i) = - \sum_{j=1}^{d_y} \mb{y}_i(j) \log \hat{\mb{y}}_i(j).
\end{equation}

\end{itemize}

Depending on the number of classes involved as well as the activation functions being used for the model, the \textit{cross entropy} function can be further specified into the \textit{sigmoid cross entropy} (for the binary classification tasks with \textit{sigmiod} function as the activation function for the output layer) and \textit{softmax cross entropy} (for the multi-class classification tasks with \textit{softmax} function as the activation function for the output layer). 

The \textit{mean square error} and \textit{mean absolute error} are usually used for the regression tasks, while the \textit{hinge loss} and \textit{cross entropy} are usually used for the classification tasks instead. Besides these loss functions introduced above, many other loss functions can be used for the specific learning problems as well, e.g., \textit{huber loss}, \textit{cosine distance}, and the readers may refer to the referred articles for more information.

To provide more information about the \textit{hinge loss} and the \textit{cross entropy loss} functions, we will use an example to illustrate their usages in real-world problems.

\begin{example}

%------------------------
\begin{figure}[t]
    \centering
    \includegraphics[width=0.8\textwidth]{./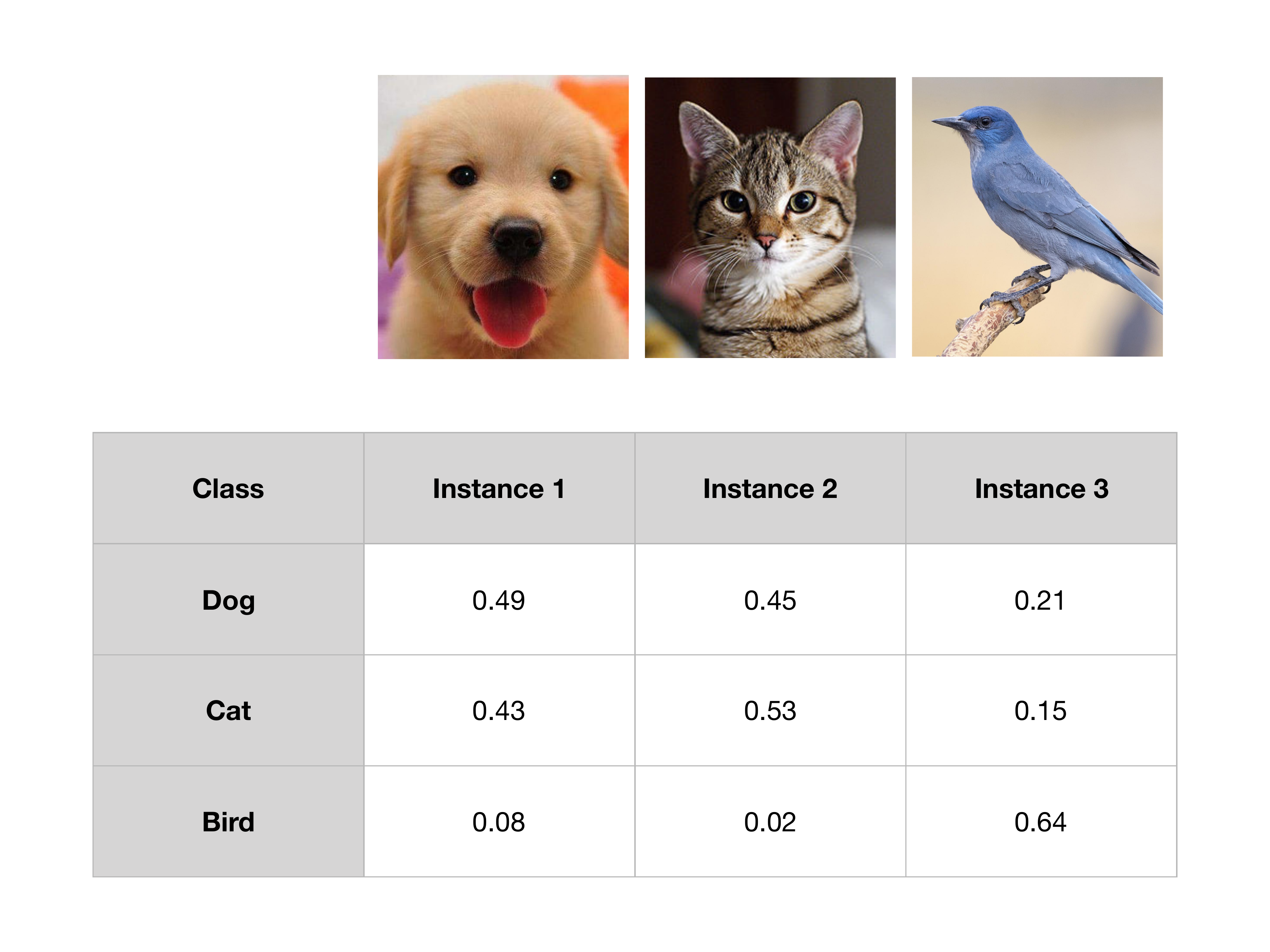}
    \caption{An Example to Illustrate the Hinge Loss and Cross Entropy Loss Functions.}
    \label{fig:chap_gradient_descent_sec1_loss_function_example}
\end{figure}
%------------------------

As show in Figure~\ref{fig:chap_gradient_descent_sec1_loss_function_example}, based on the three input images (i.e., the data instances), we can achieve their prediction labels by a deep learning model as shown in the bottom table. For the dog input image (i.e., instance 1), we know its true label should be $\mb{y}_1 = [1, 0, 0]^\top$ and the prediction label is $\hat{\mb{y}}_1 = [0.49, 0.43, 0.08]^\top$, where these three entries in the vector correspond to the \textit{dog}, \textit{cat} and \textit{bird} class labels respectively. Similarly, we can represent the true labels and prediction labels of the cat image and bird image (i.e., instance 2 and instance 3) as vectors $\mb{y}_2 = [0, 1, 0]^\top$, $\hat{\mb{y}}_2 = [0.45, 0.53, 0.02]^\top$, $\mb{y}_3 = [0, 0, 1]^\top$, and $\hat{\mb{y}}_3 = [0.21, 0.15, 0.64]^\top$ respectively.
 
Based on the true labels and the prediction labels, we can represent the introduced loss terms by the deep learning model as follows
\begin{itemize}

\item \textit{Instance 1}: For instance $1$, we know its true label is dog (i.e., entry $1$ in the label vector), and its true label index is $l_{\mb{y}_1} = 1$. Therefore, we can represent the introduced \textit{hinge loss} for instance $1$ as follows:
\begin{align}
\begin{aligned}
\ell_{Hinge}(\hat{\mb{y}}_1, \mb{y}_1) &= \sum_{j \neq 1} \max(0, \hat{\mb{y}}_1(j) - \hat{\mb{y}}_1(1) + 1)\\
&= \max(0, \hat{\mb{y}}_1(2) - \hat{\mb{y}}_1(1) + 1) +  \max(0, \hat{\mb{y}}_1(3) - \hat{\mb{y}}_1(1) + 1)\\
&= \max(0, 0.43 - 0.49 + 1) +  \max(0, 0.08 - 0.49 + 1)\\
&= 1.53.
\end{aligned}
\end{align}

\item \textit{Instance 2}: For instance $2$, we know its true label is cat (i.e., entry $2$ in the label vector), and its true label index will be $l_{\mb{y}_2} = 2$. Therefore, the introduced \textit{hinge loss} for instance $2$ can be represented as
\begin{align}
\begin{aligned}
\ell_{Hinge}(\hat{\mb{y}}_2, \mb{y}_2) &= \sum_{j \neq 2} \max(0, \hat{\mb{y}}_2(j) - \hat{\mb{y}}_2(2) + 1)\\
&= \max(0, \hat{\mb{y}}_2(1) - \hat{\mb{y}}_2(2) + 1) +  \max(0, \hat{\mb{y}}_2(3) - \hat{\mb{y}}_2(2) + 1)\\
&= \max(0, 0.45 - 0.53 + 1) +  \max(0, 0.02 - 0.53 + 1)\\
&= 1.41.
\end{aligned}
\end{align}

\item \textit{Instance 3}: Similarly, for the input instance $3$, we know its true label index is $l_{\mb{y}_3} = 3$, and the introduced \textit{hinge loss} for the instance can be denoted as
\begin{align}
\begin{aligned}
\ell_{Hinge}(\hat{\mb{y}}_3, \mb{y}_3) &= \sum_{j \neq 3} \max(0, \hat{\mb{y}}_3(j) - \hat{\mb{y}}_3(3) + 1)\\
&= \max(0, \hat{\mb{y}}_3(1) - \hat{\mb{y}}_3(3) + 1) +  \max(0, \hat{\mb{y}}_2(2) - \hat{\mb{y}}_3(3) + 1)\\
&= \max(0, 0.21 - 0.64 + 1) +  \max(0, 0.15 - 0.64 + 1)\\
&= 1.08.
\end{aligned}
\end{align}

\end{itemize}

Meanwhile, based on the true label and the prediction label vectors of these $3$ input data instances, we can represent their introduced \textit{cross entropy loss} as follows:
\begin{itemize}

\item \textit{Instance 1}: 
\begin{align}
\begin{aligned}
\ell_{CE}(\hat{\mb{y}}_1, \mb{y}_1) &= - \sum_{j=1}^{d_y} \mb{y}_1(j) \log \hat{\mb{y}}_1(j)\\
&= - 1 \times \log 0.49 - 0 \times \log 0.43 - 0 \times \log 0.08\\
&= 0.713.
\end{aligned}
\end{align}

\item \textit{Instance 2}:
\begin{align}
\begin{aligned}
\ell_{CE}(\hat{\mb{y}}_2, \mb{y}_2) &= - \sum_{j=1}^{d_y} \mb{y}_2(j) \log \hat{\mb{y}}_2(j)\\
&= - 0 \times \log 0.45 - 1 \times \log 0.53 - 0 \times \log 0.02\\
&= 0.635.
\end{aligned}
\end{align}

\item \textit{Instance 3}: 
\begin{align}
\begin{aligned}
\ell_{CE}(\hat{\mb{y}}_3, \mb{y}_3) &= - \sum_{j=1}^{d_y} \mb{y}_3(j) \log \hat{\mb{y}}_3(j)\\
&= - 0 \times \log 0.21 - 0 \times \log 0.15 - 1 \times \log 0.64\\
&= 0.446.
\end{aligned}
\end{align}
\end{itemize}

\end{example}

Furthermore, based on the introduced loss for all the data instances in the training set, we can represent the total introduced loss term by the deep learning model as follows
\begin{equation}
\mc{L}(\boldsymbol{\theta}; \mc{T}) = \sum_{(\mb{x}_i, \mb{y}_i) \in \mc{T}}   \ell (\hat{\mb{y}}_i, \mb{y}_i) = \sum_{(\mb{x}_i, \mb{y}_i) \in \mc{T}}  \ell (F(\mb{x}_i; \boldsymbol{\theta}), \mb{y}_i).
\end{equation}

By fitting the training data, the training of the deep learning model aims at minimizing the loss introduced on the training set so as to achieve the optimal variables, i.e., the optimal $\boldsymbol{\theta}$. Formally, the objective function of deep learning model training can be represented as follows:
\begin{equation}
\min_{\boldsymbol{\theta} \in \mathit{\Theta}} \mc{L}(\boldsymbol{\theta}; \mc{T}),
\end{equation}
where $\mathit{\Theta}$ denotes the variable domain and we can have $\mathit{\Theta} = \mathbb{R}^{d_\theta}$ ($d_\theta$ denotes the dimension of variable vector $\boldsymbol{\theta}$) if there is no other constraints on the variables to be learned. To address the objective function, in the following sections, we will introduce a group of gradient descent based optimization algorithms, which can learn the globally optimal or locally optimal variables for the deep learning models in the case if the loss function is convex or non-convex respectively. A more comprehensive experimental analysis about their performance will be provided at the end of this paper with some hands-on experiments on some real-world datasets.

%------------------------------------------------------------------------------

\section{Conventional Gradient Descent based Learning Algorithms}\label{ref:sec_GD}
In this section, we will introduce the conventional \textit{vanilla gradient descent}, the \textit{stochastic gradient descent} and the \textit{mini-batch gradient descent} algorithms. The main differences among them lie in the amount of training data used to update the model variables in each iteration. They will also serve as the base algorithms for the variant algorithms to be introduced in the following sections.

\subsection{Vanilla Gradient Descent}\label{ref:subsec_GD}

\textit{Vanilla gradient descent} is also well-known as the the \textit{batch gradient descent}. Given the training set $\mathcal{T}$ as introduced in the previous section, the \textit{vanilla gradient descent} optimization algorithm optimizes the model variables iteratively with the following equation:
\begin{equation}
\boldsymbol{\theta}^{(\tau)} = \boldsymbol{\theta}^{{(\tau-1)}} - \eta \cdot \nabla_{\boldsymbol{\theta}} \mc{L}(\boldsymbol{\theta}^{{(\tau-1)}}; \mc{T}),
\end{equation}
where $\tau \ge 1$ denotes the updating iteration.

In the above updating equation, the physical meanings of notations $\boldsymbol{\theta}^{(\tau)} $, $\nabla_{\boldsymbol{\theta}} \mc{L}(\boldsymbol{\theta}^{{(\tau-1)}}; \mc{T})$ and $\eta$ are illustrated as follows:
\begin{enumerate}
\item Term $\boldsymbol{\theta}^{(\tau)} $ denotes the model variable vector updated via iteration $\tau$. At $\tau = 0$, vector $\boldsymbol{\theta}^{(0)}$ denotes the initial model variable vector, which is usually randomly initialized subject to certain distributions, e.g., uniform distribution $\mc{U}(a, b)$ or normal distribution $\mc{N}(\mu, \sigma^2)$. To achieve better performance, the hyper-parameters $a$, $b$ in the uniform distribution, or $\mu$, $\sigma$ in the normal distribution can be fine-tuned.

\item Term $\nabla_{\boldsymbol{\theta}} \mc{L}(\boldsymbol{\theta}^{{(\tau-1)}}; \mc{T})$ denotes the derivative of the loss function $\mc{L}(\boldsymbol{\theta}; \mc{T})$ regarding the variable $\boldsymbol{\theta}$ based on the complete training set $\mc{T}$ and the variable vector value $\boldsymbol{\theta}^{{(\tau-1)}}$ achieved in iteration ${\tau-1}$.

\item Term $\eta$ denotes the \textit{learning rate} in the gradient descent algorithm, which is usually a hyper-parameter with a small value (e.g., $10^{-3}$). Fine-tuning of the parameter is necessary to achieve a fast convergence in practical applications of the \textit{vanilla gradient descent} algorithm.
\end{enumerate}

Such an iterative updating process continues until convergence, and the variable vector $\boldsymbol{\theta}$ achieved at convergence will be outputted as the (globally or locally) optimal variable for the deep learning models. The pseudo-code of the \textit{vanilla gradient descent} algorithm is available in Algorithm~\ref{alg:chapter_gradient_descent_GD}. In Algorithm~\ref{alg:chapter_gradient_descent_GD}, the model variables are initialized with the normal distribution. The normal distribution standard deviation $\sigma$ is the input parameters, and the mean $\mu$ is set as $0$. As to the the convergence condition, it can be (1) the loss term changes in two sequential iterations is less than a pre-specified threshold, (2) the model variable $\theta$ changes is within a pre-specified range, or (3) the pre-specified iteration round number has been reached. 

%------------------------------------------------------------------
%\setlength{\textfloatsep}{0pt}
\begin{algorithm}[H]
\small
\caption{Vanilla Gradient Descent}
\label{alg:chapter_gradient_descent_GD}
\begin{algorithmic}[1]
	\REQUIRE Training Set: $\mc{T}$; Learning Rate $\eta$; Normal Distribution Std: $\sigma$.
\ENSURE  Model Parameter $\boldsymbol{\theta}$
\STATE	{Initialize parameter with Normal distribution $\boldsymbol{\theta} \sim N(0, \sigma^2)$}
\STATE	{Initialize convergence $tag = False$}
\WHILE	{$tag == False$}
\STATE	{Compute gradient $\nabla_{\boldsymbol{\theta}} \mc{L}(\boldsymbol{\theta}; \mc{T})$ on the training set $\mc{T}$}
\STATE	{Update variable $\boldsymbol{\theta} = \boldsymbol{\theta}- \eta \cdot \nabla_{\boldsymbol{\theta}} \mc{L}(\boldsymbol{\theta}; \mc{T})$}
\IF		{convergence condition holds}
\STATE	{$tag = True$}
\ENDIF
\ENDWHILE
\STATE	{\textbf{Return} model variable $\boldsymbol{\theta}$}
\end{algorithmic}
\end{algorithm}
%------------------------------------------------------------------

According to the descriptions, in the \textit{vanilla gradient descent} algorithm, to update the model variables, we need to compute the gradient of the loss function regarding the variable, i.e., $\nabla_{\boldsymbol{\theta}} \mc{L}(\boldsymbol{\theta}^{{(\tau-1)}}; \mc{T})$, in each iteration, which is usually very time consuming for a large training set. To improve the learning efficiency, we will introduce the \textit{stochastic gradient descent} and \textit{mini-batch gradient descent} learning algorithms in the following two subsections.

%------------------------------------------------------------------------------

\subsection{Stochastic Gradient Descent}\label{ref:subsec_SGD}

\textit{Vanilla gradient descent} computes loss function gradient with the complete training set, which will be very inefficient if the training set contains lots of data instances. Instead of using the complete training set, the \textit{stochastic gradient descent} (SGD) algorithm updates the model variables by computing the loss function gradient instances by instances. Therefore, \textit{stochastic gradient descent} can be much faster than \textit{vanilla gradient descent}, but it may also cause heavy fluctuations of the loss function in the updating process.

Formally, given the training set $\mc{T}$ and the initialized model variable vector $\boldsymbol{\theta}^{(0)}$, for each instance $(\mb{x}_i, \mb{y}_i) \in \mc{T}$, \textit{stochastic gradient descent} algorithm will update the model variable with the following equation iteratively:
\begin{equation}
\boldsymbol{\theta}^{(\tau)}  = \boldsymbol{\theta}^{{(\tau-1)}} - \eta \cdot \nabla_{\boldsymbol{\theta}} \mc{L}(\boldsymbol{\theta}^{{(\tau-1)}}; (\mb{x}_i, \mb{y}_i)),
\end{equation}
where the loss term $\mc{L}(\boldsymbol{\theta}; (\mb{x}_i, \mb{y}_i)) = \ell (F(\mb{x}_i; \boldsymbol{\theta}), \mb{y}_i)$ denotes the loss introduced by the model on data instance $(\mb{x}_i, \mb{y}_i)$. The pseudo-code of the \textit{stochastic gradient descent} learning algorithm is available in Algorithm~\ref{alg:chapter_gradient_descent_SGD}

%------------------------------------------------------------------
%\setlength{\textfloatsep}{0pt}
\begin{algorithm}[H]
\small
\caption{Stochastic Gradient Descent}
\label{alg:chapter_gradient_descent_SGD}
\begin{algorithmic}[1]
	\REQUIRE Training Set $\mc{T}$; Learning Rate $\eta$; Normal Distribution Std: $\sigma$.
\ENSURE  Model Parameter $\boldsymbol{\theta}$
\STATE	{Initialize parameter with Normal distribution $\boldsymbol{\theta} \sim N(0, \sigma^2)$}
\STATE	{Initialize convergence $tag = False$}
\WHILE	{$tag == False$}
\STATE	{Shuffle the training set $\mc{T}$}
\FOR	{each data instance $(\mb{x}_i, \mb{y}_i) \in \mc{T}$}
\STATE	{Compute gradient $\nabla_{\boldsymbol{\theta}} \mc{L}(\boldsymbol{\theta}; (\mb{x}_i, \mb{y}_i))$ on the training instance $(\mb{x}_i, \mb{y}_i)$}
\STATE	{Update variable $\boldsymbol{\theta} = \boldsymbol{\theta}- \eta \cdot \nabla_{\boldsymbol{\theta}} \mc{L}(\boldsymbol{\theta}; (\mb{x}_i, \mb{y}_i))$}
\ENDFOR
\IF		{convergence condition holds}
\STATE	{$tag = True$}
\ENDIF
\ENDWHILE
\STATE	{\textbf{Return} model variable $\boldsymbol{\theta}$}
\end{algorithmic}
\end{algorithm}
%------------------------------------------------------------------

According to Algorithm~\ref{alg:chapter_gradient_descent_SGD}, the whole training set will be shuffled before the updating process. Even though the learning process of \textit{stochastic gradient descent} may fluctuate a lot, which actually also provides \textit{stochastic gradient descent} with the ability to jump out of local optimum. Meanwhile, by selecting a small learning rate $\eta$ and decreasing it in the learning process, \textit{stochastic gradient descent} can almost certainly converge to the local or global optimum. 

%------------------------------------------------------------------------------

\subsection{Mini-batch Gradient Descent}\label{ref:subsec_Minibatch_GD}

To balance between the \textit{vanilla gradient descent} and \textit{stochastic gradient descent} learning algorithms, \textit{mini-batch gradient descent} proposes to update the model variables with a mini-batch of training instances instead. Formally, let $\mc{B} \subset \mc{T}$ denote the mini-batch of training instances sampled from the training set $\mc{T}$. We can represent the variable updating equation of the deep learning model with the mini-batch as follows:
\begin{equation}
\boldsymbol{\theta}^{(\tau)}  = \boldsymbol{\theta}^{{(\tau-1)}} - \eta \cdot \nabla_{\boldsymbol{\theta}} \mc{L}(\boldsymbol{\theta}^{{(\tau-1)}}; \mc{B}),
\end{equation}
where $\mc{L}(\boldsymbol{\theta}; \mc{B})$ denotes the loss term introduced by the model on the mini-batch $\mc{B}$. The pseudo-code of the \textit{mini-batch gradient descent} algorithm can be illustrated in Algorithm~\ref{alg:chapter_gradient_descent_mini-batch_GD}. 

%------------------------------------------------------------------
%\setlength{\textfloatsep}{0pt}
\begin{algorithm}[H]
\small
\caption{Mini-batch Gradient Descent}
\label{alg:chapter_gradient_descent_mini-batch_GD}
\begin{algorithmic}[1]
	\REQUIRE Training Set $\mc{T}$; Learning Rate $\eta$; Normal Distribution Std $\sigma$; Mini-batch Size $b$.
\ENSURE  Model Parameter $\boldsymbol{\theta}$
\STATE	{Initialize parameter with Normal distribution $\boldsymbol{\theta} \sim N(0, \sigma^2)$}
\STATE	{Initialize convergence $tag = False$}
\WHILE	{$tag == False$}
\STATE	{Shuffle the training set $\mc{T}$}
\FOR	{each mini-batch $\mc{B} \subset \mc{T}$}
\STATE	{Compute gradient $\nabla_{\boldsymbol{\theta}} \mc{L}(\boldsymbol{\theta}; \mc{B})$ on the mini-batch $\mc{B}$}
\STATE	{Update variable $\boldsymbol{\theta} = \boldsymbol{\theta}- \eta \cdot \nabla_{\boldsymbol{\theta}} \mc{L}(\boldsymbol{\theta}; \mc{B})$}
\ENDFOR
\IF		{convergence condition holds}
\STATE	{$tag = True$}
\ENDIF
\ENDWHILE
\STATE	{\textbf{Return} model variable $\boldsymbol{\theta}$}
\end{algorithmic}
\end{algorithm}
%------------------------------------------------------------------

In the \textit{mini-batch gradient descent} algorithm, the batch size $b$ is provided as a parameter, which can take values, like $64$, $128$ or $256$. In most of the cases, $b$ should not be very large and its specific value depends on the size of the training set a lot. The mini-batches are usually sampled sequentially from the training set $\mc{T}$. In other words, the training set $\mc{T}$ can be divided into multiple batches of size $b$, and these batches can be picked one by one for updating the model variables sequentially. In some versions of the \textit{mini-batch gradient descent}, instead of sequential batch selection, they propose to randomly sample the mini-batches from the training set, which is also referred to as the random mini-batch generation process.

Compared with \textit{vanilla gradient descent}, the \textit{mini-batch gradient descent} algorithm is much more efficient especially for the training set of an extremely large size. Meanwhile, compared with the \textit{stochastic gradient descent}, the \textit{mini-batch gradient descent} algorithm greatly reduces the variance in the model variable updating process and can achieve much more stable convergence.

%------------------------------------------------------------------------------

\subsection{Performance Analysis of Vanilla GD, SGD and Mini-Batch GD}

These three \textit{gradient descent} algorithms introduced here work well for many optimization problems, and can all converge to a promising (local or global) optimum. However, these algorithms also suffer from several problems listed as follows:
\begin{itemize}

\item \textit{Learning Rate Selection}: The learning rate $\eta$ may affect the convergence of the gradient descent algorithms a lot. A large learning rate may diverge the learning process, while a small learning rate renders the convergence too slow. Therefore, selecting a good learning rate will be very important for the gradient descent algorithms.

\item \textit{Learning Rate Adjustment}: In most of the cases, a fixed learning rate in the whole updating process cannot work well for the gradient descent algorithms. In the initial stages, the algorithm may need a larger learning rate to reach a good (local or global) optimum fast. However, in the later stages, the algorithm may need to adjust the learning rate with a smaller value to fine-tune the performance instead.

\item \textit{Variable Individual Learning Rate}: For different variables, their update may actually require a different learning rates instead in the updating process. Therefore, how to use an individual learning rate for different variables is required and necessary.

\item \textit{Saddle Point Avoid}: Formally, the saddle point denotes the points with zero gradient in all the dimensions. However, for some of the dimensions, the saddle point is the local minimum, while for some other dimensions, the point is the local maximum. How to get out from such saddle points is a very challenging problem.

\end{itemize}

In the following part of this paper, we will introduce several other gradient descent based learning algorithm variants, which are mainly proposed to resolve one or several of the above problems.

%------------------------------------------------------------------------------
%------------------------------------------------------------------------------
%------------------------------------------------------------------------------

\section{Momentum based Learning Algorithms}\label{ref:sec_Momentum}

In this section, we will introduce a group of gradient descent variant algorithms, which propose to update the model variables with both the current gradient as well as the historical gradients simultaneously, which are named as the \textit{momentum based gradient descent algorithm}. Here, these algorithms will be talked about by using the mini-batch GD as the base learning algorithm.

\subsection{Momentum}\label{ref:subsec_MGD}

To smooth the fluctuation encountered in the learning process for the gradient descent algorithms (e.g., mini-batch gradient descent), Momentum \cite{momentum} is proposed to accelerate the updating convergence of the variables. Formally, Momentum updates the variables with the following equation:
\begin{alignat}{2}\label{equ:momentum_updating_equation}
\boldsymbol{\theta}^{(\tau)} &= \boldsymbol{\theta}^{(\tau - 1)} - \eta \cdot \Delta \mb{v}^{(\tau)}
\mbox{ where }
\Delta \mb{v}^{(\tau)} &= \rho \cdot \Delta\mb{v}^{(\tau-1)} + (1- \rho) \cdot \nabla_{\boldsymbol{\theta}} \mc{L}(\boldsymbol{\theta}^{(\tau-1)}),
\end{alignat}
In the above equation, $\mb{v}^{(\tau)} $ denotes the momentum term introduced for keeping record of the historical gradients till iteration $\tau$ and function $\mc{L}(\boldsymbol{\theta}) = \mc{L}(\boldsymbol{\theta}; \mc{B})$ denotes the loss function on a mini-batch $\mc{B}$. Parameter $\rho \in [0,1]$ denotes the weight of the momentum term. The pseudo-code of the Momentum based gradient descent learning algorithm is available in Algorithm~\ref{alg:chapter_gradient_descent_Momentum}.

%------------------------------------------------------------------
%\setlength{\textfloatsep}{0pt}
\begin{algorithm}[H]
\small
\caption{Momentum based Mini-batch Gradient Descent}
\label{alg:chapter_gradient_descent_Momentum}
\begin{algorithmic}[1]
	\REQUIRE Training Set $\mc{T}$; Learning Rate $\eta$; Normal Distribution Std $\sigma$; Mini-batch Size $b$; Momentum Term Weight: $\rho$.
\ENSURE  Model Parameter $\boldsymbol{\theta}$
\STATE	{Initialize parameter with Normal distribution $\boldsymbol{\theta} \sim N(0, \sigma^2)$}
\STATE	{Initialize Momentum term $\Delta \mb{v} = \mb{0}$}
\STATE	{Initialize convergence $tag = False$}
\WHILE	{$tag == False$}
\STATE	{Shuffle the training set $\mc{T}$}
\FOR	{each mini-batch $\mc{B} \subset \mc{T}$}
\STATE	{Compute gradient $\nabla_{\boldsymbol{\theta}} \mc{L}(\boldsymbol{\theta}; \mc{B})$ on the mini-batch $\mc{B}$}
\STATE	{Update term $\Delta \mb{v} = \rho \cdot \Delta \mb{v} + (1-\rho) \cdot  \nabla_{\boldsymbol{\theta}} \mc{L}(\boldsymbol{\theta}; \mc{B})$}
\STATE	{Update variable $\boldsymbol{\theta} = \boldsymbol{\theta}- \eta \cdot \Delta \mb{v}$}
\ENDFOR
\IF		{convergence condition holds}
\STATE	{$tag = True$}
\ENDIF
\ENDWHILE
\STATE	{\textbf{Return} model variable $\boldsymbol{\theta}$}
\end{algorithmic}
\end{algorithm}
%------------------------------------------------------------------

Based on such a recursive updating equation of vector $\boldsymbol{\theta}^{{(\tau)} }$, we can actually achieve an equivalent representation of $\boldsymbol{\theta}^{{(\tau)} }$ merely with the gradient terms of the loss function according to the following Lemma.

\begin{lemma}\label{lemma:momentum_updating_equation}
Vector $\Delta \mb{v}^{(\tau)}$ can be formally represented with the following equation:
\begin{equation}
\Delta \mb{v}^{(\tau)} = \sum_{t = 0}^{\tau - 1} \rho^t \cdot (1-\rho) \cdot \nabla_{\boldsymbol{\theta}} \mc{L}(\boldsymbol{\theta}^{(\tau-1-t)}).
\end{equation}
\end{lemma}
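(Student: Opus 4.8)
The plan is to prove the closed form by induction on the iteration index $\tau$, unrolling the recurrence
\begin{equation}
\Delta \mb{v}^{(\tau)} = \rho \cdot \Delta\mb{v}^{(\tau-1)} + (1- \rho) \cdot \nabla_{\boldsymbol{\theta}} \mc{L}(\boldsymbol{\theta}^{(\tau-1)})
\end{equation}
given in Equation~\eqref{equ:momentum_updating_equation}. The only external fact I need beyond this recurrence is the initial condition $\Delta \mb{v}^{(0)} = \mb{0}$, which is set in line 2 of Algorithm~\ref{alg:chapter_gradient_descent_Momentum}; this anchors the base case.

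For the base case $\tau = 1$, I would substitute $\Delta \mb{v}^{(0)} = \mb{0}$ into the recurrence to get $\Delta \mb{v}^{(1)} = (1-\rho)\, \nabla_{\boldsymbol{\theta}} \mc{L}(\boldsymbol{\theta}^{(0)})$, and check that the claimed sum, evaluated at $\tau=1$, collapses to the single $t=0$ term $\rho^0 (1-\rho)\, \nabla_{\boldsymbol{\theta}} \mc{L}(\boldsymbol{\theta}^{(0)})$, which agrees.

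For the inductive step, I would assume the formula holds at $\tau-1$, i.e.\ $\Delta \mb{v}^{(\tau-1)} = \sum_{t=0}^{\tau-2} \rho^t (1-\rho)\, \nabla_{\boldsymbol{\theta}} \mc{L}(\boldsymbol{\theta}^{(\tau-2-t)})$, then plug this into the recurrence. Multiplying the inductive hypothesis by $\rho$ turns each $\rho^t$ into $\rho^{t+1}$ and each superscript $(\tau-2-t)$ is preserved, after which the fresh term $(1-\rho)\, \nabla_{\boldsymbol{\theta}} \mc{L}(\boldsymbol{\theta}^{(\tau-1)})$ is added. The key manipulation is a reindexing $s = t+1$, which shifts the summation range to $s=1,\dots,\tau-1$ and rewrites the superscript as $(\tau-1-s)$; the appended gradient term is then exactly the missing $s=0$ summand, so the two pieces merge into $\sum_{s=0}^{\tau-1} \rho^s (1-\rho)\, \nabla_{\boldsymbol{\theta}} \mc{L}(\boldsymbol{\theta}^{(\tau-1-s)})$, completing the induction.

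There is no genuine obstacle here, since the recurrence is linear and the argument is a routine geometric-style unrolling; the only place demanding care is the bookkeeping of the index shift, namely confirming that raising the exponent of $\rho$ by one is consistent with decrementing the iteration superscript, so that the newly injected current gradient lands precisely in the vacated $t=0$ slot. I would simply verify that alignment explicitly to make the telescoping transparent.
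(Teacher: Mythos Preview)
Your proposal is correct and follows essentially the same approach as the paper: both prove the identity by induction on the iteration index, using the initialization $\Delta \mb{v}^{(0)} = \mb{0}$ for the base case and then unrolling the recurrence with an index shift so that the freshly added gradient term fills the $t=0$ slot. The only cosmetic difference is that the paper phrases the inductive step as going from $k$ to $k+1$ while you go from $\tau-1$ to $\tau$.
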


\begin{proof}
The Lemma can be proved by induction on $\tau$.
\begin{enumerate}
\item Base Case: In the case that $\tau = 1$, according to the recursive representation of  $\boldsymbol{\theta}$, we have
\begin{align}
\begin{aligned}
\Delta \mb{v}^{(1)} &= \rho \cdot \Delta\mb{v}^{(0)} + (1- \rho) \cdot \nabla_{\boldsymbol{\theta}} \mc{L}(\boldsymbol{\theta}^{(0)})\\
&= (1- \rho) \cdot \nabla_{\boldsymbol{\theta}} \mc{L}(\boldsymbol{\theta}^{(0)}),
\end{aligned}
\end{align} 
where $\Delta\mb{v}^{(0)}$ is initialized as a zero vector.

\item Assumption: We assume the equation holds for $\tau = k$, i.e., 
\begin{equation}
\Delta \mb{v}^{(k)} = \sum_{t = 0}^{k - 1} \rho^t \cdot (1-\rho) \cdot \nabla_{\boldsymbol{\theta}} \mc{L}(\boldsymbol{\theta}^{(k-1-t)}).
\end{equation}

\item Induction: For $\tau = k+1$, based on Equation~(\ref{equ:momentum_updating_equation}), we can represent the vector $\Delta \mb{v}^{(k+1)}$ to be
\begin{align}
\begin{aligned}
\Delta \mb{v}^{(k+1)} &= \rho \cdot \Delta\mb{v}^{(k)} + (1- \rho) \cdot \nabla_{\boldsymbol{\theta}} \mc{L}(\boldsymbol{\theta}^{(k)})\\
&= \rho \cdot \sum_{t = 0}^{k - 1} \rho^t \cdot (1-\rho) \cdot \nabla_{\boldsymbol{\theta}} \mc{L}(\boldsymbol{\theta}^{(k-1-t)}) +  (1- \rho) \cdot \nabla_{\boldsymbol{\theta}} \mc{L}(\boldsymbol{\theta}^{(k)})\\
&= \sum_{t = 1}^{k} \rho^t \cdot (1-\rho) \cdot \nabla_{\boldsymbol{\theta}} \mc{L}(\boldsymbol{\theta}^{(k-t)}) +  (1- \rho) \cdot \nabla_{\boldsymbol{\theta}} \mc{L}(\boldsymbol{\theta}^{(k)})\\
&= \sum_{t = 0}^{k} \rho^t \cdot (1-\rho) \cdot \nabla_{\boldsymbol{\theta}} \mc{L}(\boldsymbol{\theta}^{(k-t)}),
\end{aligned}
\end{align}
which can conclude this proof.
\end{enumerate}
\end{proof}

Considering the parameter $\rho \in [0, 1]$, the gradients of the iterations which are far away from the current iteration will decay exponentially. Compared with the conventional gradient descent algorithm, the Momentum can achieve a faster convergence rate, which can be explained based on the gradient updated in the above equation. For the gradient descent algorithm, the gradient updated in iteration $\tau$ can be denoted as $\nabla_{\boldsymbol{\theta}} \mc{L}(\boldsymbol{\theta}^{(\tau-1)})$, while the updated gradient for Momentum can be denoted as $\Delta\mb{v}^{(\tau)} = (1-\rho) \cdot \nabla_{\boldsymbol{\theta}} \mc{L}(\boldsymbol{\theta}^{(\tau-1)}) + \rho \cdot \Delta\mb{v}^{(\tau-1)}$. Term $\Delta\mb{v}^{(\tau-1)}$ keeps records of the historical gradients. As the gradient descent algorithms updates the variables from the steep zone to the relatively flat zone, the large historical gradient terms stored in $\Delta\mb{v}^{(\tau-1)}$ allows will actually accelerate the algorithm to reach the (local or global) optimum. 

%------------------------------------------------------------------------------

\subsection{Nesterov Accelerated Gradient}\label{ref:subsec_NAG}

For the gradient descent algorithms introduced so far, they all update the variables based on the gradients at both the historical or current points without knowledge about the future points they will reach in the learning process. It makes the learning process to be blind and the learning performance highly unpredictable. 

The \textit{Nesterov Accelerated Gradient} (i.e., NAG) \cite{nag} method propose to resolve this problem by updating the variables with the gradient at an approximated future point instead. The variable $\boldsymbol{\theta}$ to be achieved at step $\tau$ can be denoted as $\boldsymbol{\theta}^{(\tau)}$, and it can be estimated as $\hat{\boldsymbol{\theta}}^{(\tau)} = \boldsymbol{\theta}^{(\tau-1)} - \eta \cdot \rho \cdot \Delta \mb{v}^{(\tau-1)}$, where $\Delta \mb{v}^{(\tau-1)}$ denotes the momentum term at iteration $\tau -1$. Formally, assisted with the \textit{look-ahead gradient} and \textit{momentum}, the variable updating equations in NAG can be formally represented as follows:
\begin{alignat}{2}\label{equ:NAG_updating_equation}
\boldsymbol{\theta}^{(\tau)} = \boldsymbol{\theta}^{(\tau - 1)} - \eta \cdot \Delta \mb{v}^{(\tau)},
\mbox{ where }
\begin{cases}
\hat{\boldsymbol{\theta}}^{(\tau)} &= {\boldsymbol{\theta}}^{(\tau-1)} - \eta \cdot \rho \cdot \Delta \mb{v}^{(\tau-1)},\\
\Delta \mb{v}^{(\tau)} &= \rho \cdot \Delta\mb{v}^{(\tau-1)} + (1- \rho) \cdot \nabla_{\boldsymbol{\theta}} \mc{L}(\hat{\boldsymbol{\theta}}^{(\tau)}).
\end{cases}
\end{alignat}

By computing the gradient one step forward, the NAG algorithm is able to update the model variables much more effectively. The pseudo-code of the NAG learning algorithm is available in Algorithm~\ref{alg:chapter_gradient_descent_NAG}.

%------------------------------------------------------------------
%\setlength{\textfloatsep}{0pt}
\begin{algorithm}[H]
\small
\caption{NAG based Mini-batch Gradient Descent}
\label{alg:chapter_gradient_descent_NAG}
\begin{algorithmic}[1]
	\REQUIRE Training Set $\mc{T}$; Learning Rate $\eta$; Normal Distribution Std $\sigma$; Mini-batch Size $b$; Momentum Term Weight: $\rho$.
\ENSURE  Model Parameter $\boldsymbol{\theta}$
\STATE	{Initialize parameter with Normal distribution $\boldsymbol{\theta} \sim N(0, \sigma^2)$}
\STATE	{Initialize Momentum term $\Delta \mb{v} = \mb{0}$}
\STATE	{Initialize convergence $tag = False$}
\WHILE	{$tag == False$}
\STATE	{Shuffle the training set $\mc{T}$}
\FOR	{each mini-batch $\mc{B} \subset \mc{T}$}
\STATE	{Compute $\hat{\boldsymbol{\theta}} = {\boldsymbol{\theta}} - \eta \cdot \rho \cdot \Delta \mb{v}$}
\STATE	{Compute gradient $\nabla_{\boldsymbol{\theta}} \mc{L}(\hat{\boldsymbol{\theta}}; \mc{B})$ on the mini-batch $\mc{B}$}
\STATE	{Update term $\Delta \mb{v} = \rho \cdot \Delta \mb{v} + (1-\rho) \cdot  \nabla_{\boldsymbol{\theta}} \mc{L}(\hat{\boldsymbol{\theta}}; \mc{B})$}
\STATE	{Update variable $\boldsymbol{\theta} = \boldsymbol{\theta}- \eta \cdot \Delta \mb{v}$}
\ENDFOR
\IF		{convergence condition holds}
\STATE	{$tag = True$}
\ENDIF
\ENDWHILE
\STATE	{\textbf{Return} model variable $\boldsymbol{\theta}$}
\end{algorithmic}
\end{algorithm}
%------------------------------------------------------------------

%------------------------------------------------------------------------------
%------------------------------------------------------------------------------
%------------------------------------------------------------------------------

\section{Adaptive Gradient based Learning Algorithms}\label{ref:sec_Adaptive}

In this section, we will introduce a group of learning algorithms which updates the variables with adaptive learning rates. The algorithms introduced here include Adagrad, RMSprop and Adadelta respectively.

\subsection{Adagrad}\label{ref:subsec_Adagrad}

For the learning algorithms introduced before, the learning rate in these methods are mostly fixed and identical for all the variables in vector $\boldsymbol{\theta}$. However, in the variable updating process, for different variables, their required learning rates can be different. For the variables reaching the optimum, a smaller learning rate is needed; while for the variables far away from the optimum, we may need to use a relatively larger learning rate so as to reach the optimum faster. To resolve these two problems, in this part, we will introduce one \textit{adaptive gradient method}, namely Adagrad \cite{adagrad}. 

Formally, the learning equation for Adagrad can be represented as follows:
\begin{alignat}{2}
\boldsymbol{\theta}^{(\tau)} = \boldsymbol{\theta}^{(\tau - 1)} - \frac{\eta}{\sqrt{ diag(\mb{G}^{(\tau)}) + \epsilon \cdot \mb{I} }} \mb{g}^{(\tau - 1)}
\mbox{ where }
\begin{cases}
\mb{g}^{(\tau - 1)} &= \nabla_{\boldsymbol{\theta}} \mc{L}({\boldsymbol{\theta}}^{(\tau-1)}),\\
\mb{G}^{(\tau)} &= \sum_{t=0}^{\tau-1} \mb{g}^{(t)} (\mb{g}^{(t)})^\top.
\end{cases}
\end{alignat}
In the above updating equation, operator $diag(\mb{G})$ defines a diagonal matrix of the same dimensions as $\mb{G}$ but only with the elements on the diagonal of $\mb{G}$, and $\epsilon$ is a smoothing term to avoid the division of zero values on the diagonal of the matrix. Matrix $\mb{G}^{(\tau)}$ keeps records of the computed historical gradients from the beginning until the current iteration. Considering the values on the diagonal of matrix $\mb{G}^{(\tau)}$ will be different, the learning rate of variables in vector $\boldsymbol{\theta}$ will be different, e.g., $\eta_i^{(\tau)} = \frac{\eta}{\sqrt{\mb{G}^{(\tau)}(i,i) + \epsilon}}$ for variable $\boldsymbol{\theta}(i)$ in iteration $\tau$. In addition, since the matrix $\mb{G}^{(\tau)}$ will be updated in each iteration, the learning rate for the same variable in different iterations will be different as well, which is the reason why the algorithm is called the adaptive learning algorithms. The pseudo-code of Adagrad is provided in Algorithm~\ref{alg:chapter_gradient_descent_Adagrad}. 

%------------------------------------------------------------------
%\setlength{\textfloatsep}{0pt}
\begin{algorithm}[H]
\small
\caption{Adagrad based Mini-batch Gradient Descent}
\label{alg:chapter_gradient_descent_Adagrad}
\begin{algorithmic}[1]
	\REQUIRE Training Set $\mc{T}$; Learning Rate $\eta$; Normal Distribution Std $\sigma$; Mini-batch Size $b$.
\ENSURE  Model Parameter $\boldsymbol{\theta}$
\STATE	{Initialize parameter with Normal distribution $\boldsymbol{\theta} \sim N(0, \sigma^2)$}
\STATE	{Initialize Matrix $\mb{G} = \mb{0}$}
\STATE	{Initialize convergence $tag = False$}
\WHILE	{$tag == False$}
\STATE	{Shuffle the training set $\mc{T}$}
\FOR	{each mini-batch $\mc{B} \subset \mc{T}$}
\STATE	{Compute gradient  vector $\mb{g} = \nabla_{\boldsymbol{\theta}} \mc{L}({\boldsymbol{\theta}}; \mc{B})$ on the mini-batch $\mc{B}$}
\STATE	{Update matrix $\mb{G} = \mb{G} + \mb{g}\mb{g}^\top$}
\STATE	{Update variable $\boldsymbol{\theta} = \boldsymbol{\theta}- \frac{\eta}{ \sqrt{ diag(\mb{G}) + \epsilon \cdot \mb{I} } } \mb{g}$}
\ENDFOR
\IF		{convergence condition holds}
\STATE	{$tag = True$}
\ENDIF
\ENDWHILE
\STATE	{\textbf{Return} model variable $\boldsymbol{\theta}$}
\end{algorithmic}
\end{algorithm}
%------------------------------------------------------------------

Such a learning mechanism in Adagrad allows both adaptive learning rate in the learning process without the need of manual tuning, as well as different learning rate for different variables. Furthermore, as the iteration continues, the values on the diagonal of matrix $\mb{G}$ will be no decreasing, i.e., the learning rates of the variables will keep decreasing in the learning process. It will also create problems for the learning in later iterations, since the variables can no longer be effectively updated with information from the training data. In addition, Adagrad still needs an initial learning rate parameter $\eta$ to start the learning process, which can also be treated as one of the shortcomings of Adagrad.

%------------------------------------------------------------------------------

\subsection{RMSprop}\label{ref:subsec_RMSprop}

To resolve the problem with monotonically decreasing learning rate in Adagrad (i.e., entries in matrix $\frac{\eta}{\sqrt{ diag(\mb{G}^{(\tau)}) + \epsilon \cdot \mb{I} }} $), in this part, we will introduce another learning algorithm, namely RMSprop \cite{rmsprop}. RMSprop properly decays the weight of the historical accumulated gradient in the matrix $\mb{G}$ defined in Adagrad, and also allows the adjustment of learning rate in the updating process. Formally, the variable updating equations in RMSprop can be represented with the following equations:
\begin{equation}
\boldsymbol{\theta}^{(\tau)} = \boldsymbol{\theta}^{(\tau - 1)} - \frac{\eta}{\sqrt{ diag(\mb{G}^{(\tau)}) + \epsilon \cdot \mb{I} }} \mb{g}^{(\tau - 1)},
\end{equation}
where
\begin{equation}\label{equ:RMSprop_Adaptive}
\begin{cases}
\mb{g}^{(\tau - 1)} &= \nabla_{\boldsymbol{\theta}} \mc{L}({\boldsymbol{\theta}}^{(\tau-1)}),\\
\mb{G}^{(\tau)} &= \rho \cdot \mb{G}^{(\tau-1)} + (1-\rho) \cdot \mb{g}^{(\tau-1)} (\mb{g}^{(\tau-1)})^\top.
\end{cases}
\end{equation}

In the above equation, the denominator term is also usually denoted as $RMS(\mb{g}^{(\tau-1)}) =\sqrt{ diag(\mb{G}^{(\tau)}) + \epsilon \cdot \mb{I} }$ (RMS denotes \textit{root mean square} metric on vector $\mb{g}^{(\tau-1)}$). Therefore, the updating equation is also usually written as follows:
\begin{equation}
\boldsymbol{\theta}^{(\tau)} = \boldsymbol{\theta}^{(\tau - 1)} - \frac{\eta}{ RMS(\mb{g}^{(\tau -1)}) } \mb{g}^{(\tau - 1)}.
\end{equation}
In the representation of matrix $\mb{G}$, parameter $\rho$ denotes the weight of the historically accumulated gradients, and $\rho$ is usually set as $0.9$ according to Geoff Hinton in his Lecture at Coursera. Based on the representation of $\mb{G}$, for the gradient computed in $t$ iterations ahead of the current iteration, they will be assigned with a very small weight, i.e., $\rho^{(t)} \cdot (1-\rho)$, which decays exponentially with $t$. The pseudo-code of RMSprop is provided in Algorithm~\ref{alg:chapter_gradient_descent_RMSprop}, where the learning rate $\eta$ is still needed and provided as a parameter in the algorithm.

%------------------------------------------------------------------
%\setlength{\textfloatsep}{0pt}
\begin{algorithm}[H]
\small
\caption{RMSprop based Mini-batch Gradient Descent}
\label{alg:chapter_gradient_descent_RMSprop}
\begin{algorithmic}[1]
	\REQUIRE Training Set $\mc{T}$; Learning Rate $\eta$; Normal Distribution Std $\sigma$; Mini-batch Size $b$; Parameter $\rho$.
\ENSURE  Model Parameter $\boldsymbol{\theta}$
\STATE	{Initialize parameter with Normal distribution $\boldsymbol{\theta} \sim N(0, \sigma^2)$}
\STATE	{Initialize Matrix $\mb{G} = \mb{0}$}
\STATE	{Initialize convergence $tag = False$}
\WHILE	{$tag == False$}
\STATE	{Shuffle the training set $\mc{T}$}
\FOR	{each mini-batch $\mc{B} \subset \mc{T}$}
\STATE	{Compute gradient  vector $\mb{g} = \nabla_{\boldsymbol{\theta}} \mc{L}({\boldsymbol{\theta}}; \mc{B})$ on the mini-batch $\mc{B}$}
\STATE	{Update matrix $\mb{G} = \rho \cdot \mb{G} +  (1-\rho) \cdot \mb{g}\mb{g}^\top$}
\STATE	{Update variable $\boldsymbol{\theta} = \boldsymbol{\theta}- \frac{\eta}{ \sqrt{ diag(\mb{G}) + \epsilon \cdot \mb{I} } } \mb{g}$}
\ENDFOR
\IF		{convergence condition holds}
\STATE	{$tag = True$}
\ENDIF
\ENDWHILE
\STATE	{\textbf{Return} model variable $\boldsymbol{\theta}$}
\end{algorithmic}
\end{algorithm}
%------------------------------------------------------------------

%------------------------------------------------------------------------------

\subsection{Adadelta}\label{ref:subsec_Adadelta}

Adadelta \cite{adadelta} and RMSprop are proposed separately by different people almost at the same time, which address the monotonically decreasing learning rate in Adagrad with identical methods. Meanwhile, compared with RMSprop further improves Adagrad by introducing a mechanism to eliminate the learning rate from the updating equations of the variables. Based on the following updating equation we introduce before in RMSprop:
\begin{equation}
\boldsymbol{\theta}^{(\tau)} = \boldsymbol{\theta}^{(\tau - 1)} - \frac{\eta}{ RMS(\mb{g}^{(\tau -1)}) } \mb{g}^{(\tau - 1)}.
\end{equation}

However, as introduced in \cite{adadelta}, the learning algorithms introduced so far fail to consider the units of the learning variables in the updating process. Here, the units effectively indicate the physical meanings of the variables, which can be ``\textit{km}'', ``\textit{s}'' or ``\textit{kg}''. If the parameter $\boldsymbol{\theta}$ has hypothetical units, then the updates in parameter, i.e., $\Delta \boldsymbol{\theta} = \nabla_{\boldsymbol{\theta}} \mc{L}(\boldsymbol{\theta})$, should have the same units as well. However, for the learning algorithms we have introduced before, e.g., SGD, Momentum, NAD, and Adagrad, such an assumption cannot hold. For instance, for the case of SGD, the units of update term is actually proportional to the inverse of the units of $\boldsymbol{\theta}$:
\begin{equation}
\mbox{units of } \Delta \boldsymbol{\theta} \propto \mbox{units of } \mb{g} \propto \mbox{units of } \frac{\partial \mathcal{L}(\cdot)}{\partial \boldsymbol{\theta}} \propto \frac{1}{\mbox{units of } \boldsymbol{\theta}}.
\end{equation}

To handle such a problem, Adadelta proposes to look at the second-order methods, e.g., Newton's method that uses Hessian approximation. In Newton's method, the units of variables updated can be denoted as 
\begin{equation}
\mbox{units of } \Delta \boldsymbol{\theta} \propto \mbox{units of } \mb{H}^{-1} \mb{g} \propto \mbox{units of } \frac{ \frac{ \partial \mc{L}(\cdot) }{ \partial \boldsymbol{\theta} } }{ \frac{ \partial^2 \mc{L}(\cdot) }{ \partial \boldsymbol{\theta}^2 } } \propto {\mbox{units of } \boldsymbol{\theta}},
\end{equation}
where $\mb{H} = \frac{\partial^2 \mc{L}(\boldsymbol{\theta})}{ \partial \boldsymbol{\theta}^2 }$ denotes the Hessian matrix computed with the derivatives of the loss function regarding the variables.

To match the units, Adadelta rearranges the Newton's method updating term as
\begin{equation}
\Delta \boldsymbol{\theta} = - \frac{ \frac{ \partial \mc{L}(\cdot) }{ \partial \boldsymbol{\theta} } }{ \frac{ \partial^2 \mc{L}(\cdot) }{ \partial \boldsymbol{\theta}^2 } },
\end{equation}
from which we can derive 
\begin{equation}
\mb{H}^{-1} = - \frac{1}{\frac{\partial^2 \mc{L}(\boldsymbol{\theta})}{\partial \boldsymbol{\theta}^2}} = - \frac{\Delta \boldsymbol{\theta}}{\frac{ \partial \mc{L}(\boldsymbol{\theta}) }{ \partial \boldsymbol{\theta} }}.
\end{equation}

Therefore, we can rewrite the updating equation for the variables based on Newton's method as follows:
\begin{align}
\begin{aligned}
\boldsymbol{\theta}^{(\tau)} &= \boldsymbol{\theta}^{(\tau - 1)} - (\mb{H}^{(\tau)})^{-1} \mb{g}^{(\tau - 1)}\\
&= \boldsymbol{\theta}^{(\tau - 1)} - \frac{\Delta \boldsymbol{\theta}^{(\tau)}}{\frac{ \partial \mc{L}(\boldsymbol{\theta}^{(\tau)}) }{ \partial \boldsymbol{\theta} }} \mb{g}^{(\tau - 1)}.
\end{aligned}
\end{align}

Similar to RMSprop, Adadelta approximates the denominator in the above equation with the RMS of the previous gradients. Meanwhile, the $\Delta \boldsymbol{\theta}^{(\tau)}$ term in the current iteration is unknown yet. Adadelta proposes to approximate the numerator in the above equation, and adopts a similar way to use $RMS(\Delta \boldsymbol{\theta})$ to replace $\Delta \boldsymbol{\theta}$ instead. Formally, the variable updating equation in Adadelta can be formally written as follows:
\begin{equation}
\boldsymbol{\theta}^{(\tau)} = \boldsymbol{\theta}^{(\tau - 1)} - \frac{ RMS(\Delta \boldsymbol{\theta}^{(\tau -1)}) }{ RMS(\mb{g}^{(\tau -1)}) } \mb{g}^{(\tau - 1)},
\end{equation}
where $RMS(\Delta \boldsymbol{\theta}^{(\tau -1)})$ keep records of the $\Delta \boldsymbol{\theta}$ in the prior iterations until the previous iteration $\tau -1$. The pseudo-code of the Adadelta algorithm is provided in Algorithm~\ref{alg:chapter_gradient_descent_Adadelta}. According to the algorithm description, Adadelta doesn't use any learning rate in the updating equation, which effectively resolves the two weakness of Adagrad introduced before. 

%------------------------------------------------------------------
%\setlength{\textfloatsep}{0pt}
\begin{algorithm}[H]
\small
\caption{Adadelta based Mini-batch Gradient Descent}
\label{alg:chapter_gradient_descent_Adadelta}
\begin{algorithmic}[1]
	\REQUIRE Training Set $\mc{T}$; Learning Rate $\eta$; Normal Distribution Std $\sigma$; Mini-batch Size $b$; Parameter $\rho$.
\ENSURE  Model Parameter $\boldsymbol{\theta}$
\STATE	{Initialize parameter with Normal distribution $\boldsymbol{\theta} \sim N(0, \sigma^2)$}
\STATE	{Initialize Matrix $\mb{G} = \mb{0}$}
\STATE	{Initialize Matrix $\mb{\Theta}= \mb{0}$}
\STATE	{Initialize convergence $tag = False$}
\WHILE	{$tag == False$}
\STATE	{Shuffle the training set $\mc{T}$}
\FOR	{each mini-batch $\mc{B} \subset \mc{T}$}
\STATE	{Compute gradient  vector $\mb{g} = \nabla_{\boldsymbol{\theta}} \mc{L}({\boldsymbol{\theta}}; \mc{B})$ on the mini-batch $\mc{B}$}
\STATE	{Update matrix $\mb{G} = \rho \cdot \mb{G} +  (1-\rho) \cdot \mb{g}\mb{g}^\top$}
\STATE	{Computer updating vector $\Delta \boldsymbol{\theta} = - \frac{\sqrt{ diag(\mb{\Theta}) + \epsilon \cdot \mb{I} }}{ \sqrt{ diag(\mb{G}) + \epsilon \cdot \mb{I} } } \mb{g}$}
\STATE 	{Update matrix $\mb{\Theta} = \rho \cdot \mb{\Theta} +  (1-\rho) \cdot \Delta \boldsymbol{\theta} (\Delta \boldsymbol{\theta})^\top$}
\STATE	{Update variable $\boldsymbol{\theta} = \boldsymbol{\theta} + \Delta \boldsymbol{\theta}$}
\ENDFOR
\IF		{convergence condition holds}
\STATE	{$tag = True$}
\ENDIF
\ENDWHILE
\STATE	{\textbf{Return} model variable $\boldsymbol{\theta}$}
\end{algorithmic}
\end{algorithm}
%------------------------------------------------------------------

%------------------------------------------------------------------------------
%------------------------------------------------------------------------------
%------------------------------------------------------------------------------

\section{Momentum \& Adaptive Gradient based Learning Algorithms}\label{ref:sec_Adam}

In this section, we will introduce the learning algorithms which combine the advantages of both the Momentum algorithm and the algorithm with adaptive learning rate, including Adam and Nadam respectively.

\subsection{Adam}\label{ref:subsec_Adam}

In recent years, a new learning algorithm, namely \textit{Adaptive Moment Estimation} (Adam) \cite{adam}, has been introduced, which only computes the first-order gradients with little memory requirement. Similar to RMSprop and Adadelta, Adam keeps records of the past squared first-order gradients; while Adam also keeps records of the past first-order gradients as well, both of which will decay exponentially in the learning process. Formally, we can use vectors $\mb{m}^{(\tau)}$ and $\mb{v}^{(\tau)}$ to denote the terms storing the first-order gradients and the squared first-order gradients respectively, whose concrete representations are provided as follows:
\begin{align}
\begin{aligned}
\mb{m}^{(\tau)} &= \beta_1 \cdot \mb{m}^{(\tau -1)} + (1- \beta_1) \cdot \mb{g}^{(\tau-1)},\\
\mb{v}^{(\tau)} &= \beta_2 \cdot \mb{v}^{(\tau -1)} + (1- \beta_2) \cdot \mb{g}^{(\tau-1)} \odot \mb{g}^{(\tau-1)},
\end{aligned}
\end{align}
where vector $\mb{g}^{(\tau-1)} = \nabla_{\boldsymbol{\theta}} \mc{L}({\boldsymbol{\theta}}^{(\tau-1)})$ and $\mb{g}^{(\tau-1)} \odot \mb{g}^{(\tau-1)}$ denote the element-wise product of the vectors. In the above equation, vector $\mb{v}^{(\tau)}$ actually stores the identical information as $diag(\mb{G}^{(\tau)})$ used in Equation~(\ref{equ:RMSprop_Adaptive}). However, instead of storing the whole matrix, Adam tends to use less space by keeping such a vector record.

As introduced in \cite{adam}, vectors $\mb{m}^{(\tau)}$ and $\mb{v}^{(\tau)}$ are biased toward zero, especially when $\beta_1$ and $\beta_2$ are close to $1$, since $\mb{m}^{(0)}$ and $\mb{v}^{(0)}$ are initialized to be the zero vectors respectively. To resolve such a problem, Adam introduces to rescale the terms as follows:
\begin{align}
\begin{aligned}
\hat{\mb{m}}^{(\tau)} = \frac{ \mb{m}^{(\tau)} }{1 - \beta_1^{\tau}},\\
\hat{\mb{v}}^{(\tau)} = \frac{ \mb{v}^{(\tau)} }{1 - \beta_2^{\tau}},
\end{aligned}
\end{align}
where the superscripts $\tau$ of terms $\beta_1^{\tau}$ and $\beta_2^{\tau}$ denotes the power instead of the iteration count index ${(\tau)}$ used before.

Based on the rescaled vector $\hat{\mb{m}}^{(\tau)}$ and matrix $\hat{\mb{v}}^{(\tau)}$, Adam will update the model variable with the following equation:
\begin{equation}\label{equ:Adam_updating_equation}
\boldsymbol{\theta}^{(\tau)} = \boldsymbol{\theta}^{(\tau-1)} - \frac{\eta}{\sqrt{\hat{\mb{v}}^{(\tau)}} + \epsilon} \odot \hat{\mb{m}}^{(\tau)}
\end{equation}

According to the above descriptions, Adam can be viewed as an integration of the RMSprop algorithm with the Momentum algorithm, which allows both faster convergence and adaptive learning rate simultaneously. Formally, the pseudo-code of the Adam learning algorithm is provided in Algorithm~\ref{alg:chapter_gradient_descent_Adam}, where $\beta_1$ and $\beta_2$ are inputted as the parameters for the algorithm. According to \cite{adam}, the parameters in Adam can be initialized as follows: $\epsilon = 10^{-8}$, $\beta_1 = 0.9$ and $\beta_2 = 0.999$.

%------------------------------------------------------------------
%\setlength{\textfloatsep}{0pt}
\begin{algorithm}[H]
\small
\caption{Adam}
\label{alg:chapter_gradient_descent_Adam}
\begin{algorithmic}[1]
	\REQUIRE Training Set $\mc{T}$; Learning Rate $\eta$; Normal Distribution Std $\sigma$; Mini-batch Size $b$; Decay Parameters $\beta_1$, $\beta_2$.
\ENSURE  Model Parameter $\boldsymbol{\theta}$
\STATE	{Initialize parameter with Normal distribution $\boldsymbol{\theta} \sim N(0, \sigma^2)$}
\STATE	{Initialize vector $\mb{m} = \mb{0}$}
\STATE	{Initialize vector $\mb{v} = \mb{0}$}
\STATE	{Initialize step $\tau = {0}$}
\STATE	{Initialize convergence $tag = False$}
\WHILE	{$tag == False$}
\STATE	{Shuffle the training set $\mc{T}$}
\FOR	{each mini-batch $\mc{B} \subset \mc{T}$}
\STATE	{Update step $\tau=\tau+1$}
\STATE	{Compute gradient vector $\mb{g} = \nabla_{\boldsymbol{\theta}} \mc{L}({\boldsymbol{\theta}}; \mc{B})$ on the mini-batch $\mc{B}$}
\STATE	{Update vector $\mb{m} = \beta_1 \cdot \mb{m} + (1- \beta_1) \cdot \mb{g}$}
\STATE	{Update vector $\mb{v} = \beta_2 \cdot \mb{v} + (1- \beta_2) \cdot \mb{g} \odot \mb{g}$}
\STATE	{Rescale vector $\hat{\mb{m}} = \mb{m}/(1 - \beta_1^{\tau})$}
\STATE	{Rescale vector $\hat{\mb{v}} = \mb{v}/(1 - \beta_2^{\tau})$}
\STATE	{Update variable $\boldsymbol{\theta} = \boldsymbol{\theta} -  \frac{\eta}{\sqrt{\hat{\mb{v}}} + \epsilon } \odot \hat{\mb{m}}$}
\ENDFOR
\IF		{convergence condition holds}
\STATE	{$tag = True$}
\ENDIF
\ENDWHILE
\STATE	{\textbf{Return} model variable $\boldsymbol{\theta}$}
\end{algorithmic}
\end{algorithm}
%------------------------------------------------------------------

%------------------------------------------------------------------------------

\subsection{Nadam}\label{ref:subsec_Nadam}

Adam introduced in the previous section can be viewed as an integration of Momentum based learning algorithm with the adaptive gradient based learning algorithm, where the vanilla momentum is adopted. In \cite{nadam}, a learning algorithm is introduced to replace the vanilla momentum with the  \textit{Nesterov's accelerated gradient} (NAG) instead, and the new learning algorithm is called the \textit{Nesterov-accelerated Adam} (Nadam). Before providing the updating equation of Nadam, we would like to illustrate the updating equation of NAG as follows again (Equation~(\ref{equ:NAG_updating_equation})):
\begin{alignat}{2}
\boldsymbol{\theta}^{(\tau)} = \boldsymbol{\theta}^{(\tau - 1)} - \eta \cdot \Delta \mb{v}^{(\tau)},
\mbox{ where }
\begin{cases}
\Delta \mb{v}^{(\tau)} &= \rho \cdot \Delta\mb{v}^{(\tau-1)} + (1- \rho) \cdot \nabla_{\boldsymbol{\theta}} \mc{L}(\hat{\boldsymbol{\theta}}^{(\tau)}),\\
\hat{\boldsymbol{\theta}}^{(\tau)} &= {\boldsymbol{\theta}}^{(\tau-1)} - \eta \cdot \rho \cdot \Delta \mb{v}^{(\tau-1)}.
\end{cases}
\end{alignat}

According to the above updating equation, the momentum term $\Delta \mb{v}^{(\tau-1)}$ is used twice in the process: (1) $\Delta \mb{v}^{(\tau-1)}$ is used to compute $\hat{\boldsymbol{\theta}}^{(\tau)}$; and (2) $\Delta \mb{v}^{(\tau-1)}$ is used to compute $\Delta \mb{v}^{(\tau)}$. Nadam proposes to change the above method with the following updating equations instead:
\begin{equation}\label{equ:NAG_Momentum_in_Nadam}
\boldsymbol{\theta}^{(\tau)} = \boldsymbol{\theta}^{(\tau - 1)} - \eta \cdot \left(\rho \cdot \Delta \mb{v}^{(\tau)} + (1-\rho) \cdot \nabla_{\boldsymbol{\theta}} \mc{L}({\boldsymbol{\theta}}^{(\tau-1)}) \right),
\end{equation}
where
\begin{equation}
\Delta \mb{v}^{(\tau)} = \rho \cdot \Delta\mb{v}^{(\tau-1)} + (1- \rho) \cdot \nabla_{\boldsymbol{\theta}} \mc{L}({\boldsymbol{\theta}}^{(\tau-1)}).
\end{equation}

Compared with NAG, the above equation doesn't look-ahead in computing the gradient term; while compared with vanilla momentum, the above equation uses both the momentum term and the gradient term in the current iteration. Term $\rho \cdot \Delta \mb{v}^{(\tau)} + (1-\rho) \cdot \nabla_{\boldsymbol{\theta}} \mc{L}({\boldsymbol{\theta}}^{(\tau-1)})$ used in the updating equation actually is an approximation to $\Delta \mb{v}^{(\tau+1)}$ in the next iteration, which achieve the objective of looking ahead in updating the variables.

Meanwhile, according to Equation~(\ref{equ:Adam_updating_equation}), we can rewrite the updating equation of Adam as follows:
\begin{equation}\label{equ:Adam_updating_equation_in_Nadam}
\boldsymbol{\theta}^{(\tau)} = \boldsymbol{\theta}^{(\tau-1)} - \frac{\eta}{\sqrt{\hat{\mb{v}}^{(\tau)}} + \epsilon} \odot \hat{\mb{m}}^{(\tau)},
\end{equation}
where
\begin{equation}
\begin{aligned}
\hat{\mb{m}}^{(\tau)} = \frac{ \mb{m}^{(\tau)} }{1 - \beta_1^{\tau}}, \mbox{ and } \mb{m}^{(\tau)} = \beta_1 \cdot \mb{m}^{(\tau -1)} + (1- \beta_1) \cdot \mb{g}^{(\tau)}.
\end{aligned}
\end{equation}

By replacing the $\hat{\mb{m}}^{(\tau)}$ term into Equation~(\ref{equ:Adam_updating_equation_in_Nadam}), we can rewrite it as follows:
\begin{align}
\begin{aligned}
\boldsymbol{\theta}^{(\tau)} &= \boldsymbol{\theta}^{(\tau-1)} - \frac{\eta}{\sqrt{\hat{\mb{v}}^{(\tau)}} + \epsilon} \odot \left( \frac{ \beta_1 \cdot \mb{m}^{(\tau -1)} }{1 - \beta_1^{\tau}} + \frac{ (1- \beta_1) \cdot \mb{g}^{(\tau)}}{ 1 - \beta_1^{\tau} } \right),\\
&= \boldsymbol{\theta}^{(\tau-1)} - \frac{\eta}{\sqrt{\hat{\mb{v}}^{(\tau)}} + \epsilon} \odot \left(  \beta_1 \cdot \hat{\mb{m}}^{(\tau -1)} + \frac{ (1- \beta_1) \cdot \mb{g}^{(\tau)}}{ 1 - \beta_1^{\tau} } \right).
\end{aligned}
\end{align}
Similar to the analysis provided in Equation~(\ref{equ:NAG_Momentum_in_Nadam}), Nadam proposes to look-ahead by replacing the $\hat{\mb{m}}^{(\tau -1)}$ term used in the parentheses with $\hat{\mb{m}}^{(\tau)}$ instead, which can bring about the following updating equation:
\begin{align}
\boldsymbol{\theta}^{(\tau)} = \boldsymbol{\theta}^{(\tau-1)} - \frac{\eta}{\sqrt{\hat{\mb{v}}^{(\tau)}} + \epsilon} \odot \left(  \beta_1 \cdot \hat{\mb{m}}^{(\tau)} + \frac{ (1- \beta_1) \cdot \mb{g}^{(\tau)} }{ 1 - \beta_1^{\tau} } \right).
\end{align}

The pseudo-code of the Nadam algorithm is provided in Algorithm~\ref{alg:chapter_gradient_descent_Nadam}, where most of the code are identical to those in Algorithm~\ref{alg:chapter_gradient_descent_Adam}, except the last line in updating the variable $\boldsymbol{\theta}$.

%------------------------------------------------------------------
%\setlength{\textfloatsep}{0pt}
\begin{algorithm}[H]
\small
\caption{Nadam}
\label{alg:chapter_gradient_descent_Nadam}
\begin{algorithmic}[1]
	\REQUIRE Training Set $\mc{T}$; Learning Rate $\eta$; Normal Distribution Std $\sigma$; Mini-batch Size $b$; Decay Parameters $\beta_1$, $\beta_2$.
\ENSURE  Model Parameter $\boldsymbol{\theta}$
\STATE	{Initialize parameter with Normal distribution $\boldsymbol{\theta} \sim N(0, \sigma^2)$}
\STATE	{Initialize vector $\mb{m} = \mb{0}$}
\STATE	{Initialize vector $\mb{v} = \mb{0}$}
\STATE	{Initialize step $\tau = {0}$}
\STATE	{Initialize convergence $tag = False$}
\WHILE	{$tag == False$}
\STATE	{Shuffle the training set $\mc{T}$}
\FOR	{each mini-batch $\mc{B} \subset \mc{T}$}
\STATE	{Update step $\tau=\tau+1$}
\STATE	{Compute gradient vector $\mb{g} = \nabla_{\boldsymbol{\theta}} \mc{L}({\boldsymbol{\theta}}; \mc{B})$ on the mini-batch $\mc{B}$}
\STATE	{Update vector $\mb{m} = \beta_1 \cdot \mb{m} + (1- \beta_1) \cdot \mb{g}$}
\STATE	{Update vector $\mb{v} = \beta_2 \cdot \mb{v} + (1- \beta_2) \cdot \mb{g} \odot \mb{g}$}
\STATE	{Rescale vector $\hat{\mb{m}} = \mb{m}/(1 - \beta_1^{\tau})$}
\STATE	{Rescale vector $\hat{\mb{v}} = \mb{v}/(1 - \beta_2^{\tau})$}
\STATE	{Update variable $\boldsymbol{\theta} = \boldsymbol{\theta} -  \frac{\eta}{\sqrt{\hat{\mb{v}}} + \epsilon } \odot \left(  \beta_1 \cdot \hat{\mb{m}} + \frac{ (1- \beta_1) \cdot \mb{g}}{ 1 - \beta_1^{\tau} } \right)$}
\ENDFOR
\IF		{convergence condition holds}
\STATE	{$tag = True$}
\ENDIF
\ENDWHILE
\STATE	{\textbf{Return} model variable $\boldsymbol{\theta}$}
\end{algorithmic}
\end{algorithm}
%------------------------------------------------------------------

\section{Hybrid Evolutionary Gradient Descent Learning Algorithms}\label{ref:sec_Hybrid}

Adam together with its many variants have been shown to be effective for optimizing a large group of problems. However, for the non-convex objective functions of deep learning models, Adam cannot guarantee to identify the globally optimal solutions, whose iterative updating process may inevitably get stuck in local optima. The performance of Adam is not very robust, which will be greatly degraded for the objective function with non-smooth shape or learning scenarios polluted by noisy data. Furthermore, the distributed computation process of Adam requires heavy synchronization, which may hinder its adoption in large-cluster based distributed computational platforms. 

On the other hand, genetic algorithm (GA), a metaheuristic algorithm inspired by the process of natural selection in evolutionary algorithms, has also been widely used for learning the solutions of many optimization problems. In GA, a population of candidate solutions will be initialized and evolved towards better ones. Several attempts have also been made to use GA for training deep neural network models \cite{ZG18, SMCLSC17, DG17} instead of the gradient descent based methods. GA has demonstrated its outstanding performance in many learning scenarios, like {non-convex objective function containing multiple local optima}, {objective function with non-smooth shape}, as well as {a large number of parameters} and {noisy environments}. GA also fits the parallel/distributed computing setting very well, whose learning process can be easily deployed on parallel/distributed computing platforms. Meanwhile, compared with Adam, GA may take more rounds to converge in addressing optimization objective functions.

In this section, we will introduce a new optimization algorithm, namely {\our} (Genetic Adaptive Momentum Estimation) \cite{gadam}, which incorporates Adam and GA into a unified learning scheme. Instead of learning one single model solution, {\our} works with a group of potential unit model solutions. In the learning process, {\our} learns the unit models with Adam and evolves them to the new generations with genetic algorithm. In addition, Algorithm {\our} can work in both standalone and parallel/distributed modes, which will also be studied in this section.

\subsection{Gadam}\label{ref:subsec_Gadam}

%----------------
\begin{figure*}[!t]
 \centering    
 \begin{minipage}[l]{1.0\columnwidth}
  \centering
    \includegraphics[width=1.0\textwidth]{./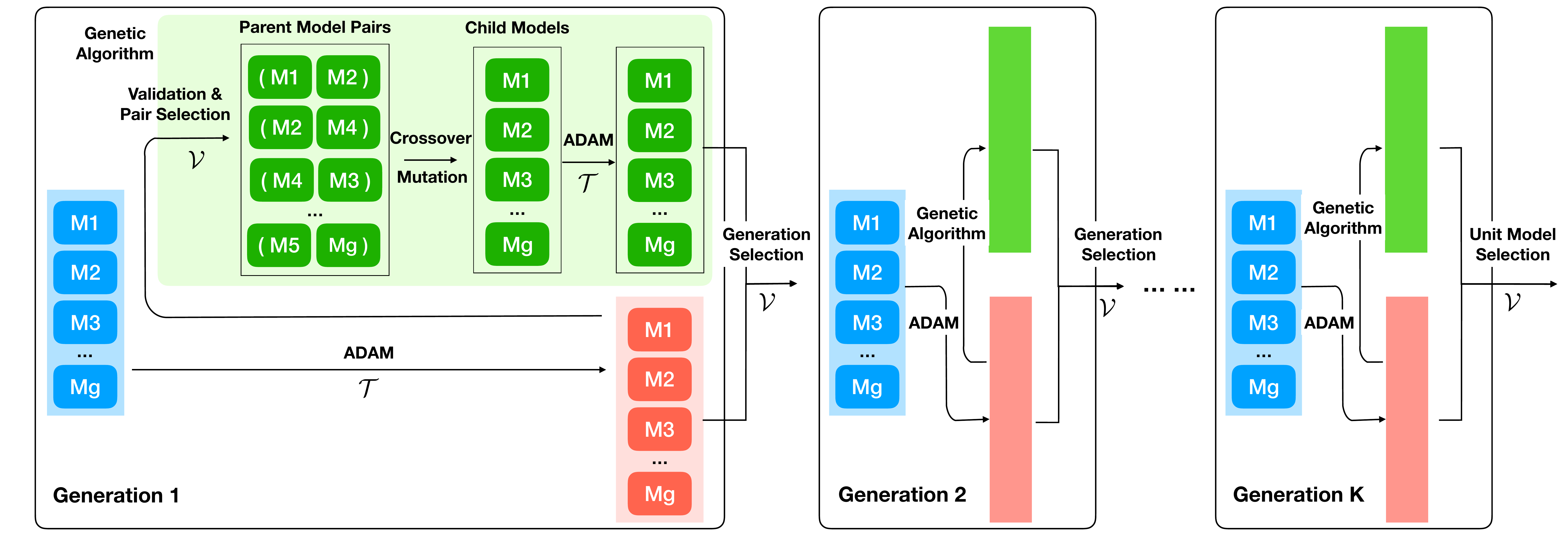}
 \end{minipage}
\caption{Overall Architecture of {\our} Model.}\label{fig:gadam_framework}
\end{figure*}
%-----------------

The overall framework of the model learning process in {\our} is illustrated in Figure~\ref{fig:gadam_framework}, which involves multiple learning generations. In each generation, a group of unit model variable will be learned with Adam from the data, which will also get evolved effectively via the genetic algorithm. Good candidate variables will be selected to form the next generation. Such an iterative model learning process continues until convergence, and the optimal unit model variable at the final generation will be selected as the output model solution. For simplicity, we will refer to unit models and their variables interchangeably without distinguishing their differences.

\subsubsection{Model Population Initialization}

{\our} learns the optimal model variables based on a set of unit models (i.e., variables of these unit models by default), namely the unit model population, where the initial unit model generation can be denoted as set $\mathcal{G}^{(0)} = \{M^{(0)}_1, M^{(0)}_2, \cdots, M^{(0)}_g\}$ ($g$ is the population size and the superscript represents the generation index). Based on the initial generation, {\our} will evolve the unit models to new generations, which can be represented as $\mathcal{G}^{(1)}, \mathcal{G}^{(2)}, \cdots, \mathcal{G}^{(K)}$ respectively. Here, parameter $K$ denotes the total generation number.

For each unit model in the initial generation $\mathcal{G}^{(0)}$, e.g., $M^{(0)}_i$, its variables $\boldsymbol{\theta}^{(0)}_i$ is initialized in {\our} with random values sampled from certain distributions (e.g., the standard normal distribution). These initial generation serve as the starting search points, from which {\our} will expand to other regions to identify the optimal solutions. Meanwhile, for the unit models in the following generations, their variable values will be generated via GA from their parent models respectively.

\subsubsection{Model Learning with Adam}

In the learning process, given any model generation $\mathcal{G}^{(k)}$ ($k \in \{1, 2, \cdots, K\}$), {\our} will learn the (locally) optimal variables for each unit model with Adam. Formally, {\our} trains the unit models with several epochs. In each epoch, for each unit model $M^{(k)}_i \in \mathcal{G}^{(k)}$, a separated training batch will be randomly sampled from the training dataset, which can be denoted as $\mathcal{B} = \{(\mb{x}_1, \mb{y}_1), (\mb{x}_2, \mb{y}_2), \cdots, (\mb{x}_b, \mb{y}_b)\} \subset \mathcal{T}$ (here, $b$ denotes the batch size and $\mathcal{T}$ represents the complete training set). Let the loss function introduced by unit model $M^{(k)}_i$ for training mini-batch $\mc{B}$ be $\ell(\boldsymbol{\theta}_i^{(k)})$, and the learned model variable vector by the training instance can be represented as
\begin{equation}
\bar{\boldsymbol{\theta}}_i^{(k)} = Adam\left( \ell(\boldsymbol{\theta}_i^{(k)}) \right).
\end{equation}
Depending on the specific unit models and application settings, the loss function will have different representations, e.g., mean square loss, hinge loss or cross entropy loss. Such a learning process continues until convergence, and the updated model generation $\mathcal{G}^{(k)}$ with (locally) optimal variables can be represented as $\bar{\mathcal{G}}^{(k)} = \{\bar{M}^{(k)}_1, \bar{M}^{(k)}_2, \cdots, \bar{M}^{(k)}_g\}$, whose corresponding variable vectors can be denoted as $\bar{\boldsymbol{\theta}}^{(k)}_1, \bar{\boldsymbol{\theta}}^{(k)}_2, \cdots, \bar{\boldsymbol{\theta}}^{(k)}_g\}$ respectively.

\subsubsection{Model Evolution with Genetic Algorithm}

In this part, based on the learned unit models, i.e., $\bar{\mathcal{G}}^{(k)}$, {\our} will further search for better solutions for the unit models effectively via the genetic algorithm.

\noindent \textbf{Model Fitness Evaluation}

Unit models with good performance may fit the learning task better. Instead of evolving models randomly, {\our} proposes to pick good unit models from the current generation to evolve. Based on a sampled validation batch $\mathcal{V} \subset \mathcal{T}$, the fitness score of unit models, e.g., $\bar{M}^{(k)}_i$, can be effectively computed based on its loss terms introduced on $\mathcal{V}$ as follows
\begin{equation}
\mathcal{L}^{(k)}_i = \mathcal{L}(\bar{M}^{(k)}_i; \mathcal{V}) = \sum_{(\mb{x}_j, \mb{y}_j) \in \mathcal{V}} \ell(\mb{x}_j, \mb{y}_j; \bar{\boldsymbol{\theta}}_i^{(k)}).
\end{equation}
Based on the computed loss values, the selection probability of unit model $\bar{M}^{(k)}_i$ can be defined with the following softmax equation
\begin{equation}\label{equ:probability}
P(\bar{M}^{(k)}_i) = \frac{ \exp^{(- \hat{\mathcal{L}}^{(k)}_i ) }}{ \sum_{j = 1}^g \exp^{(- \hat{\mathcal{L}}^{(k)}_j) }}.
\end{equation}
Necessary normalization of the loss values is usually required in real-world applications, as $\exp^{(- {\mathcal{L}}^{(k)}_i )}$ may approach $0$ or $\infty$ for extremely large positive or small negative loss values ${\mathcal{L}}^{(k)}_i$. As indicated in the probability equation, the normalized loss terms of all unit models can be formally represented as $[\hat{\mathcal{L}}^{(k)}_1, \hat{\mathcal{L}}^{(k)}_2, \cdots, \hat{\mathcal{L}}^{(k)}_g]^\top$, where $\hat{\mathcal{L}}^{(k)}_i \in [0, 1], \forall i \in \{1, 2, \cdots, g\}$. According to the computed probabilities, from the unit model set $\bar{\mathcal{G}}^{(k)}$, $g$ pairs of unit models will be selected with replacement as the parent models for evolution, which can be denoted as $\mathcal{P} = \{ ( \bar{M}^{(k)}_{i_1}, \bar{M}^{(k)}_{j_1} ) , ( \bar{M}^{(k)}_{i_2}, \bar{M}^{(k)}_{j_2} ) , \cdots, ( \bar{M}^{(k)}_{i_g}, \bar{M}^{(k)}_{j_g} ) \}$.

\noindent \textbf{Unit Model Crossover}

Given a unit model pair, e.g., $( \bar{M}^{(k)}_{i_p}, \bar{M}^{(k)}_{j_p} ) \in \mathcal{P}$, {\our} inherits their variables to the child model via the \textit{crossover} operation. In crossover, the parent models, i.e., $\bar{M}^{(k)}_{i_p}$ and $\bar{M}^{(k)}_{j_p}$, will also compete with each other, where the parent model with better performance tend to have more advantages. We can represent the child model generated from $( \bar{M}^{(k)}_{i_p}, \bar{M}^{(k)}_{j_p} )$ as $\tilde{M}^{(k)}_{p}$. For each entry in the weight variable $\tilde{\boldsymbol{\theta}}^{(k)}_p$ of the child model $\tilde{M}^{(k)}_{p}$, {\our} initializes its values as follows:
\begin{alignat}{2}
\tilde{\boldsymbol{\theta}}^{(k)}_p(m) = \mathbbm{1} (\mbox{rand} \le p_{i_p, j_p}^{(k)} ) \cdot \bar{\boldsymbol{\theta}}^{(k)}_{i_p}(m) + \mathbbm{1} (\mbox{rand} > p_{i_p, j_p}^{(k)} )  \cdot \bar{\boldsymbol{\theta}}^{(k)}_{j_p}(m).
\end{alignat}
In the equation, binary function $\mathbbm{1}(\cdot)$ returns $1$ iff the condition holds. Term ``$\mbox{rand}$'' denotes a random number in $[0, 1]$. The probability threshold $p_{i_p, j_p}^{(k)}$ is defined based on the parent models' performance:
\begin{equation}
p_{i_p, j_p}^{(k)} = \frac{ \exp^{ ( - \hat{\mathcal{L}}^{(k)}_{i_p} ) } }{ \exp^{ (- \hat{\mathcal{L}}^{(k)}_{i_p} ) } + \exp^{ (- \hat{\mathcal{L}}^{(k)}_{j_p}) } }.
\end{equation}
If $\hat{\mathcal{L}}^{(k)}_{i_p} > \hat{\mathcal{L}}^{(k)}_{j_p}$, i.e., model $\bar{M}_{i_p}^{(k)}$ introduces a larger loss than $\bar{M}_{j_p}^{(k)}$, we will have $0 < p_{i_p, j_p}^{(k)} < \frac{1}{2}$.

With such a process, based on the whole parent model pairs in set $\mathcal{P}$, {\our} will be able to generate the children model set as set $\tilde{\mathcal{G}}^{(k)} = \{ \tilde{M}^{(k)}_{1}, \tilde{M}^{(k)}_{2}, \cdots, \tilde{M}^{(k)}_{g} \}$.

\noindent \textbf{Unit Model Mutation}

To avoid the unit models getting stuck in local optimal points, {\our} adopts an operation called \textit{mutation} to adjust variable values of the generated children models in set $\{ \tilde{M}^{(k)}_{1}, \tilde{M}^{(k)}_{2}, \cdots, \tilde{M}^{(k)}_{g} \}$. Formally, for each child model $\tilde{M}^{(k)}_{q}$ parameterized with vector $\tilde{\boldsymbol{\theta}}^{(k)}_{q}$, {\our} will mutate the variable vector according to the following equation, where its $m_{th}$ entry can be updated as
\begin{alignat}{2}
\tilde{{\boldsymbol{\theta}}}^{(k)}_{q}(m) = \mathbbm{1}(\mbox{rand} \le p_q) \cdot \mbox{rand}(0, 1) + \mathbbm{1}(\mbox{rand} > p) \cdot \tilde{{\boldsymbol{\theta}}}^{(k)}_{q}(m),
\end{alignat}
In the equation, term $p_q$ denotes the \textit{mutation rate}, which is strongly correlated with the parent models' performance:
\begin{equation}
p_q = p \cdot \left(1 - P(\bar{M}^{(k)}_{i_q}) - P(\bar{M}^{(k)}_{j_q}) \right).
\end{equation}
For the child models with good parent models, they will have lower mutation rates. Term $p$ denotes the base mutation rate which is usually a small value, e.g., $0.01$, and probabilities $P(\bar{M}^{(k)}_{i_q})$ and $P(\bar{M}^{(k)}_{j_q})$ are defined in Equation~(\ref{equ:probability}). These unit models will be further trained with Adam until convergence, which will lead to the $k_{th}$ children model generation $\tilde{\mathcal{G}}^{(k)} = \{ \tilde{M}^{(k)}_{1}, \tilde{M}^{(k)}_{2}, \cdots, \tilde{M}^{(k)}_{g} \}$.

\subsubsection{New Generation Selection and Evolution Stop Criteria}\label{subsec:selection}

Among these learned unit models in the learned parent model set $\bar{\mathcal{G}}^{(k)} = \{\bar{M}^{(k)}_1, \bar{M}^{(k)}_2, \cdots, \bar{M}^{(k)}_g\}$ and children model set $\tilde{\mathcal{G}}^{(k)} = \{\tilde{M}^{(k)}_1, \tilde{M}^{(k)}_2, \cdots, \tilde{M}^{(k)}_g\}$, {\our} will re-evaluate their fitness scores based on a shared new validation batch. Among all the unit models in $\bar{\mathcal{G}}^{(k)} \cup \tilde{\mathcal{G}}^{(k)}$, the top $g$ unit models will be selected to form the $(k+1)_{th}$ generation, which can be formally represented as set $\mathcal{G}^{(k+1)} = \{M^{(k+1)}_1, M^{(k+1)}_2, \cdots, M^{(k+1)}_g \}$. Such an evolutionary learning process will stop if the maximum generation number has reached or there is no significant improvement between consequential generations, e.g., $\mathcal{G}^{(k)}$ and  $\mathcal{G}^{(k+1)}$: 
\begin{equation}\label{eq:convergence}
\Bigg | \sum_{M^{(k)}_i \in \mathcal{G}^{(k)}} \mathcal{L}^{(k)}_i - \sum_{M^{(k+1)}_i \in \mathcal{G}^{(k+1)}} \mathcal{L}^{(k+1)}_i \Bigg | \le \lambda,
\end{equation}
The above equation defines the stop criterion of {\our}, where $\lambda$ is the evolution stop threshold.

The optimization algorithm {\our} actually incorporate the advantages of both Adam and genetic algorithm. With Adam, the unit models can effectively achieve the (locally/globally) optimal solutions very fast with a few training epochs. Meanwhile, via genetic algorithm based on a number of unit models, it will also provide the opportunity to search for solutions from multiple starting points and jump out from the local optima. According to \cite{adam}, for a smooth function, Adam will converge as the function gradient vanishes. On the other hand, the genetic algorithm can also converge according to \cite{TGDSM94}. Based on these prior knowledge, the convergence of {\our} can be proved as introduced in \cite{gadam}. The training of unit models with {\our} can be effectively deployed on parallel/distributed computing platforms, where each unit model involved in {\our} can be learned with a separate process/server. Among the processes/servers, the communication costs are minor, which exist merely in the \textit{crossover} step. Literally, among all the $g$ unit models in each generation, the communication costs among them is $O(k \cdot g \cdot d_{\theta})$, where $d_{\theta}$ denotes the dimension of vector $\boldsymbol{\theta}$ and $k$ denotes the required training epochs to achieve convergence by Adam.

\section{A Summary}

In this paper, we have introduced the gradient descent algorithm together with its various recent variant algorithms for training the deep learning models. Based on the recent developments in this direction, this paper will be updated accordingly later in the near future.

%Based on the above descriptions, we also summarize the learning algorithms introduced in this paper in Table~\ref{tab:gradient_descent_based_learning_algorithm}, where the updating equations of these algorithms are provided.
%\input{./tables/chapter_gradient_descent/gradient_descent_based_learning_algorithm_summary.tex}

\newpage

\vskip 0.2in
\bibliographystyle{plain}
\bibliography{reference}

\end{document}